\begin{document}
\pagestyle{headings}
\mainmatter

\title{SSHMT: Semi-supervised Hierarchical Merge Tree for Electron Microscopy Image Segmentation} 

\titlerunning{SSHMT: Semi-supervised HMT for EM Image Segmentation}

\authorrunning{T.\ Liu, M.\ Zhang, M.\ Javanmardi, N.\ Ramesh, T.\ Tasdizen}



\author{Ting~Liu\inst{1}\and Miaomiao~Zhang\inst{2}\and Mehran~Javanmardi\inst{1}\and Nisha~Ramesh\inst{1}\and Tolga~Tasdizen\inst{1}}
\institute{Scientific Computing and Imaging Institute, University of Utah, USA\\\email{\{ting,mehran,nshramesh,tolga\}@sci.utah.edu}\\
  \and CSAIL, Massachusetts Institute of Technology, USA\\\email{miao86@mit.edu}}

\maketitle

\begin{abstract}
  Region-based methods have proven necessary for improving segmentation accuracy of neuronal structures in electron microscopy (EM) images. Most region-based segmentation methods use a scoring function to determine region merging. Such functions are usually learned with supervised algorithms that demand considerable ground truth data, which are costly to collect. We propose a semi-supervised approach that reduces this demand. Based on a merge tree structure, we develop a differentiable unsupervised loss term that enforces consistent predictions from the learned function. We then propose a Bayesian model that combines the supervised and the unsupervised information for probabilistic learning. The experimental results on three EM data sets demonstrate that by using a subset of only $3\%$ to $7\%$ of the entire ground truth data, our approach consistently performs close to the state-of-the-art supervised method with the full labeled data set, and significantly outperforms the supervised method with the same labeled subset.

\keywords{Image segmentation, electron microscopy, semi-supervised learning, hierarchical segmentation, connectomics}
\end{abstract}

\section{Introduction}
Connectomics researchers study structures of nervous systems to understand their function~\cite{sporns2005human}. Electron microscopy (EM) is the only modality capable of imaging substantial tissue volumes at sufficient resolution and has been used for the reconstruction of neural circuitry~\cite{famiglietti1991synaptic,briggman2011wiring,helmstaedter2013cellular}. The high resolution leads to image data sets at enormous scale, for which manual analysis is extremely laborious and can take decades to complete~\cite{briggman2006towards}. Therefore, reliable automatic connectome reconstruction from EM images, and as the first step, automatic segmentation of neuronal structures is crucial. However, due to the anisotropic nature, deformation, complex cellular structures and semantic ambiguity of the image data, automatic segmentation still remains challenging after years of active research.

Similar to the boundary detection/region segmentation pipeline for natural image segmentation~\cite{arbelaez2011contour,ren2013image,arbelaez2014multiscale,liu2016image}, most recent EM image segmentation methods use a membrane detection/cell segmentation pipeline. First, a membrane detector generates pixel-wise confidence maps of membrane predictions using local image cues~\cite{sommer2011ilastik,ciresan2012deep,seyedhosseini2013image}. Next, region-based methods are applied to transforming the membrane confidence maps into cell segments. It has been shown that region-based methods are necessary for improving the segmentation accuracy from membrane detections for EM images~\cite{arganda2015crowdsourcing}. A common approach to region-based segmentation is to transform a membrane confidence map into over-segmenting superpixels and use them as ``building blocks'' for final segmentation. To correctly combine superpixels, greedy region agglomeration based on certain boundary saliency has been shown to work~\cite{nunez2013machine}. Meanwhile, structures, such as loopy graphs~\cite{kaynig2015large,krasowski2015improving} or trees~\cite{liu2014modular,funke2015learning,uzunbas2016efficient}, are more often imposed to represent the region merging hierarchy and help transform the superpixel combination search into graph labeling problems. To this end, local~\cite{liu2014modular,krasowski2015improving} or structured~\cite{funke2015learning,uzunbas2016efficient} learning based methods are developed.

Most current region-based segmentation methods use a scoring function to determine how likely two adjacent regions should be combined. Such scoring functions are usually learned in a supervised manner that demands considerable amount of high-quality ground truth data. Obtaining such ground truth data, however, involves manual labeling of image pixels and is very labor intensive, especially given the large scale and complex structures of EM images. To alleviate this demand, Parag et al.\ recently propose an active learning framework~\cite{parag2014small,parag2015efficient} that starts with small sets of labeled samples and constantly measures the disagreement between a supervised classifier and a semi-supervised label propagation algorithm on unlabeled samples. Only the most disagreed samples are pushed to users for interactive labeling. The authors demonstrate that by using $15\%$ to $20\%$ of all labeled samples, the method can perform similar to the underlying fully supervised method with full training set. One disadvantage of this framework is that it does not directly explore the unsupervised information while searching for the optimal classification function. Also, retraining is required for the supervised algorithm at each iteration, which can be time consuming especially when more iterations with fewer samples per iteration are used to maximize the utilization of supervised information and minimize human effort. Moreover, repeated human interactions may lead to extra cost overhead in practice.

In this paper, we propose a semi-supervised learning framework for region-based neuron segmentation that seeks to reduce the demand for labeled data by exploiting the underlying correlation between unsupervised data samples. Based on the merge tree structure~\cite{liu2014modular,funke2015learning,uzunbas2016efficient}, we redefine the labeling constraint and formulate it into a differentiable loss function that can be effectively used to guide the unsupervised search in the function hypothesis space. We then develop a Bayesian model that incorporates both unsupervised and supervised information for probabilistic learning. The parameters that are essential to balancing the learning can be estimated from the data automatically. Our method works with very small amount of supervised data and requires no further human interaction. We show that by using only $3\%$ to $7\%$ of the labeled data, our method performs stably close to the state-of-the-art fully supervised algorithm with the entire supervised data set (Section~\ref{sec:res}). Also, our method can be conveniently adopted to replace the supervised algorithm in the active learning framework~\cite{parag2014small,parag2015efficient} and further improve the overall segmentation performance.

\section{Hierarchical Merge Tree}\label{sec:hmt}
Starting with an initial superpixel segmentation $S_o$ of an image, a merge tree $T=(\mathcal{V},\mathcal{E})$ is a graphical representation of superpixel merging order. Each node $v_i\in\mathcal{V}$ corresponds to an image region $s_i$. Each leaf node aligns with an initial superpixel in $S_o$. A non-leaf node corresponds to an image region combined by multiple superpixels, and the root node represents the whole image as a single region. An edge $e_{i,c}\in\mathcal{E}$ between $v_i$ and one of its child $v_c$ indicates $s_c\subset s_i$. Assuming only two regions are merged each time, we have $T$ as a full binary tree. A clique $p_i=(\{v_i,v_{c_1},v_{c_2}\},\{e_{i,c_1},e_{i,c_2}\})$ represents $s_i=s_{c_1}\cup s_{c_2}$. In this paper, we call clique $p_i$ is at node $v_i$. We call the cliques $p_{c_1}$ and $p_{c_2}$ at $v_{c_1}$ and $v_{c_2}$ the child cliques of $p_i$, and $p_i$ the parent clique of $p_{c_1}$ and $p_{c_2}$. If $v_i$ is a leaf node, $p_i=(\{v_i\},\varnothing)$ is called a leaf clique. We call $p_i$ a non-leaf/root/non-root clique if $v_i$ is a non-leaf/root/non-root node. An example merge tree, as shown in Fig.~\ref{fig:sub:toy_tree}, represents the merging of superpixels in Fig.~\ref{fig:sub:toy_segi}. The red box in Fig.~\ref{fig:sub:toy_tree} shows a non-leaf clique $p_7=(\{v_7,v_1,v_2\},\{e_{7,1},e_{7,2}\})$ as the child clique of $p_9=(\{v_9,v_7,v_3\},\{e_{9,7},e_{9,3}\})$. A common approach to building a merge tree is to greedily merge regions based on certain boundary saliency measurement in an iterative fashion~\cite{liu2014modular,funke2015learning,uzunbas2016efficient}.

\begin{figure}
\centering
\subfloat[\label{fig:sub:toy_segi}]{\includegraphics[width=0.25\textwidth]{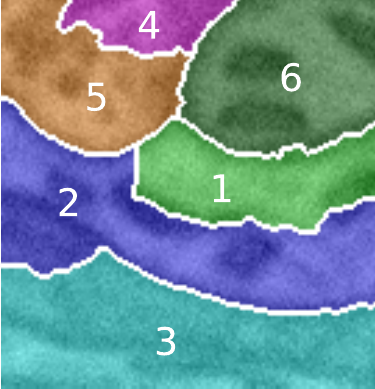}}\quad
\subfloat[\label{fig:sub:toy_seg}]{\includegraphics[width=0.25\textwidth]{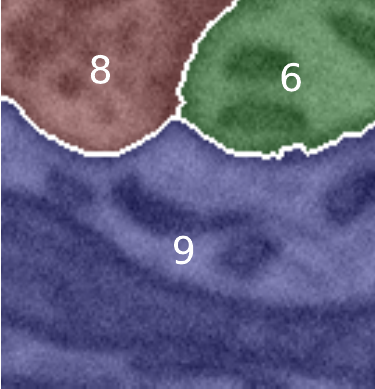}}\quad
\subfloat[\label{fig:sub:toy_tree}]{\includegraphics[width=0.44\textwidth]{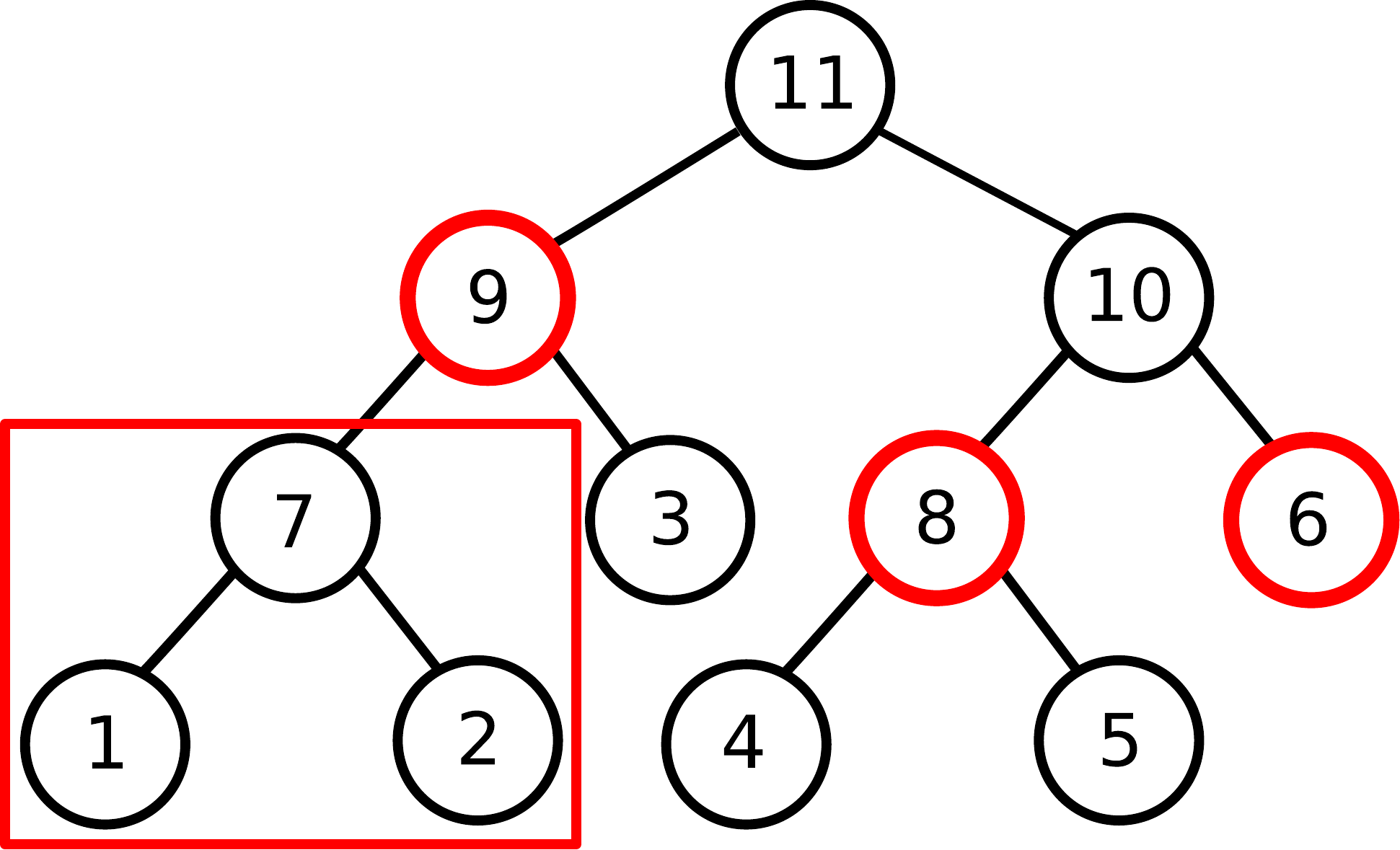}}
\caption{Example of (a) an initial superpixel segmentation, (b) a consistent final segmentation, and (c) the corresponding merge tree. The red nodes are selected ($z=1$) for the final segmentation, and the black nodes are not ($z=0$). The red box shows a clique.}\label{fig:toy}
\end{figure}

Given the merge tree, the problem of finding a final segmentation is equivalent to finding a complete label assignment $\mathbf{z}=\{z_i\}_{i=1}^{|\mathcal{V}|}$ for every node being a final segment ($z=1$) or not ($z=0$). Let $\rho(i)$ be a query function that returns the index of the parent node of $v_i$. The $k$-th ($k=1,\ldots d_i$) ancestor of $v_i$ is denoted as $\rho^k(i)$ with $d_i$ being the depth of $v_i$ in the tree, and $\rho^0(i)=i$. For every leaf-to-root path, we enforce the \emph{region consistency constraint} that requires $\sum_{k=0}^{d_i}z_{\rho^k(i)}=1$ for any leaf node $v_i$. As an example shown in Fig.~\ref{fig:sub:toy_tree}, the red nodes ($v_6$, $v_8$, and $v_9$) are labeled $z=1$ and correspond to the final segmentation in Fig.~\ref{fig:sub:toy_seg}. The rest black nodes are labeled $z=0$. Supervised algorithms are proposed to learn scoring functions in a local~\cite{liu2014modular,liu2016image} or a structured~\cite{funke2015learning,uzunbas2016efficient} fashion, followed by greedy~\cite{liu2014modular} or global~\cite{funke2015learning,liu2016image,uzunbas2016efficient} inference techniques for finding the optimal label assignment under the constraint. We refer to the local learning and greedy search inference framework in~\cite{liu2014modular} as the hierarchical merge tree (HMT) method and follow its settings in the rest of this paper, as it has been shown to achieve state-of-the-art results in the public challenges~\cite{arganda2015crowdsourcing,isbichallenge2013}.

A binary label $y_i$ is used to denote whether the region merging at clique $p_i$ occurs (``merge'', $y_i=1$) or not (``split'', $y_i=0$). For a leaf clique, $y=1$. At training time, $\mathbf{y}=\{y_i\}_{i=1}^{|\mathcal{V}|}$ is generated by comparing both the ``merge'' and ``split'' cases for non-leaf cliques against the ground truth segmentation under certain error metric (e.g.\ adapted Rand error~\cite{arganda2015crowdsourcing}). The one that causes the lower error is adopted. A binary classification function called the boundary classifier is trained with $(\mathbf{X},\mathbf{y})$, where $\mathbf{X}=\{\mathbf{x}_i\}_{i=1}^{|\mathcal{V}|}$ is a collection of feature vectors. Shape and image appearance features are commonly used.

At testing time, each non-leaf clique $p_i$ is assigned a likelihood score $P(y_i|\mathbf{x}_i)$ by the classifier. A potential for each node $v_i$ is defined as
\begin{equation}
u_i=P(y_i=1|\mathbf{x}_i)\cdot P(y_{\rho(i)}=0|\mathbf{x}_{\rho(i)}).\label{eq:node_potential}
\end{equation}
The greedy inference algorithm iteratively assigns $z=1$ to an unlabeled node with the highest potential and $z=0$ to its ancestor and descendant nodes until every node in the merge tree receives a label. The nodes with $z=1$ forms a final segmentation.

Note that HMT is not limited to segmenting images of any specific dimensionality. In practice, it has been successfully applied to both 2D~\cite{liu2014modular,arganda2015crowdsourcing} and 3D segmentation~\cite{isbichallenge2013} of EM images.

\section{SSHMT: Semi-supervised Hierarchical Merge Tree}\label{sec:sshmt}
The performance of HMT largely depends on accurate boundary predictions given fixed initial superpixels and tree structures. In this section, we propose a semi-supervised learning based HMT framework, named SSHMT, to learn accurate boundary classifiers with limited supervised data.

\subsection{Merge consistency constraint}
Following the HMT notation (Section~\ref{sec:hmt}), we first define the \emph{merge consistency constraint} for non-root cliques:
\begin{equation}
y_i\geq y_{\rho(i)},\forall i.\label{eq:mcc}
\end{equation}
Clearly, a set of consistent node labeling $\mathbf{z}$ can be transformed to a consistent $\mathbf{y}$ by assigning $y=1$ to the cliques at the nodes with $z=1$ and their descendant cliques and $y=0$ to the rest. A consistent $\mathbf{y}$ can be transformed to $\mathbf{z}$ by assigning $z=1$ to the nodes in $\{v_i\in\mathcal{V}|\forall i,\textrm{s.t.\ }y_i=1\wedge(v_i\textrm{ is the root}\vee y_{\rho(i)}=0)\}$ and $z=0$ to the rest, vice versa.

Define a clique path of length $L$ that starts at $p_i$ as an ordered set $\boldsymbol{\pi}^L_i=\{p_{\rho^l(i)}\}^{L-1}_{l=0}$. We then have
\begin{theorem}\label{theorem:cp_mono}
Any consistent label sequence $\mathbf{y}^L_i=\{y_{\rho^l(i)}\}_{l=0}^{L-1}$ for $\boldsymbol{\pi}^L_i$ under the merge consistency constraint is monotonically non-increasing.
\end{theorem}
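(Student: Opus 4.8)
The plan is to reduce the statement to the defining inequality \eqref{eq:mcc} applied termwise along the path. By definition, the label sequence $\mathbf{y}^L_i=\{y_{\rho^l(i)}\}_{l=0}^{L-1}$ is monotonically non-increasing precisely when $y_{\rho^l(i)}\ge y_{\rho^{l+1}(i)}$ holds for every $l=0,\ldots,L-2$, so it suffices to establish this single inequality for an arbitrary admissible $l$.

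First I would note that, since $\boldsymbol{\pi}^L_i$ is a valid clique path, the ancestor $\rho^{l+1}(i)$ exists for each $l\le L-2$; by the definition of the query function, $\rho^{l+1}(i)=\rho\big(\rho^l(i)\big)$, so $v_{\rho^l(i)}$ has a parent and is therefore a non-root node, which makes $p_{\rho^l(i)}$ a non-root clique. Consequently the merge consistency constraint \eqref{eq:mcc} is applicable at index $\rho^l(i)$ and gives $y_{\rho^l(i)}\ge y_{\rho(\rho^l(i))}=y_{\rho^{l+1}(i)}$, which is exactly the required step. Chaining these inequalities over $l=0,1,\ldots,L-2$ (formally a trivial induction on $L$, with $L\le 1$ vacuous) yields $y_i=y_{\rho^0(i)}\ge y_{\rho^1(i)}\ge\cdots\ge y_{\rho^{L-1}(i)}$, i.e.\ $\mathbf{y}^L_i$ is monotonically non-increasing.

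I do not expect any genuine obstacle: the argument is just an unrolling of \eqref{eq:mcc} along a root-ward path. The only point needing a line of care is the bookkeeping that every clique on the path except possibly the topmost one is non-root, so that \eqref{eq:mcc} is legitimately invoked at each step, and this is immediate from the existence of $\rho^{l+1}(i)$ for $l\le L-2$.
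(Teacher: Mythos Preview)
Your proof is correct and follows essentially the same idea as the paper's: both reduce the claim to a single application of the merge consistency constraint \eqref{eq:mcc} between consecutive terms on the path. The paper frames this as a proof by contradiction (assuming some $k$ with $y_{\rho^k(i)}<y_{\rho^{k+1}(i)}$ and deriving $y_j<y_{\rho(j)}$), while you give the equivalent direct argument and additionally spell out the non-root bookkeeping that licenses invoking \eqref{eq:mcc} at each step.
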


\begin{proof}
Assume there exists a label sequence $\mathbf{y}^L_i$ subject to the merge consistency constraint that is not monotonically non-increasing. By definition, there must exist $k\geq0$, s.t.\ $y_{\rho^k(i)}<y_{\rho^{k+1}(i)}$. Let $j=\rho^k(i)$, then $\rho^{k+1}(i)=\rho(j)$, and thus $y_j<y_{\rho(j)}$. This violates the merge consistency constraint~\eqref{eq:mcc}, which contradicts the initial assumption that $\mathbf{y}^L_i$ is subject to the merge consistency constraint. Therefore, the initial assumption must be false, and all label sequences that are subject to the merge consistency constraint must be monotonically non-increasing.\qed
\end{proof}

Intuitively, Theorem~\ref{theorem:cp_mono} states that while moving up in a merge tree, once a split occurs, no merge shall occur again among the ancestor cliques in that path. As an example, a consistent label sequence for the clique path $\{p_7,p_9,p_{11}\}$ in Fig.~\ref{fig:sub:toy_tree} can only be $\{y_7,y_9,y_{11}\}=\{0,0,0\}$, $\{1,0,0\}$, $\{1,1,0\}$, or $\{1,1,1\}$. Any other label sequence, such as $\{1,0,1\}$, is not consistent. In contrast to the region consistency constraint, the merge consistency constraint is a local constraint that holds for the entire leaf-to-root clique paths as well as any of their subparts. This allows certain computations to be decomposed as shown later in Section~\ref{sec:res}.

Let $f_i$ be a predicate that denotes whether $y_i=1$. We can express the non-increasing monotonicity of any consistent label sequence for $\boldsymbol{\pi}^L_i$ in disjunctive normal form (DNF) as
\begin{equation}
  F^L_i=\bigvee_{j=0}^{L}\left(\bigwedge_{k=0}^{j-1}f_{\rho^k(i)}\wedge\bigwedge_{k=j}^{L-1}\neg f_{\rho^k(i)}\right),\label{eq:dnf}
\end{equation}
which always holds $true$ by Theorem~\ref{theorem:cp_mono}. We approximate $F^L_i$ with real-valued variables and operators by replacing $true$ with $1$, $false$ with $0$, and $f$ with real-valued $\tilde{f}$. A negation $\neg f$ is replaced by $1-\tilde{f}$; conjunctions are replaced by multiplications; disjunctions are transformed into negations of conjunctions using De Morgan's laws and then replaced. The real-valued DNF approximation is
\begin{equation}
\tilde{F}^L_i=1-\prod_{j=0}^L\left(1-\prod_{k=0}^{j-1}\tilde{f}_{\rho^k(i)}\cdot\prod_{k=j}^{L-1}\left(1-\tilde{f}_{\rho^k(i)}\right)\right),\label{eq:rdnf}
\end{equation}
which is valued $1$ for any consistent label assignments. Observing $\tilde{f}$ is exactly a binary boundary classifier in HMT, we further relax it to be a classification function that predicts $P(y=1|\mathbf{x})\in[0,1]$. The choice of $\tilde{f}$ can be arbitrary as long as it is (piecewise) differentiable (Section~\ref{sec:sub:sslearn}). In this paper, we use a logistic sigmoid function with a linear discriminant
\begin{equation}
  \tilde{f}(\mathbf{x};\boldsymbol{w})=\frac{1}{1+\exp(-\boldsymbol{w}^{\top}\mathbf{x})},\label{eq:logsig}
\end{equation}
which is parameterized by $\boldsymbol{w}$.

We would like to find an $\tilde{f}$ so that its predictions satisfy the DNF~\eqref{eq:rdnf} for any path in a merge tree. We will introduce the learning of such $\tilde{f}$ in a semi-supervised manner in Section~\ref{sec:sub:sslearn}.

\subsection{Bayesian semi-supervised learning}\label{sec:sub:sslearn}
To learn the boundary classification function $\tilde{f}$, we use both supervised and unsupervised data. Supervised data are the clique samples with labels that are generated from ground truth segmentations. Unsupervised samples are those we do not have labels for. They can be from the images that we do not have the ground truth for or wish to segment. We use $\mathbf{X}_s$ to denote the collection of supervised sample feature vectors and $\mathbf{y}_s$ for their true labels. $\mathbf{X}$ is the collection of all supervised and unsupervised samples.

Let $\boldsymbol{\tilde{f}}_{\boldsymbol{w}}=[\tilde{f}_{j_1},\ldots,\tilde{f}_{j_{N_s}}]^{\top}$ be the predictions about the supervised samples in $\mathbf{X}_s$, and $\boldsymbol{\tilde{F}}_{\boldsymbol{w}}=[\tilde{F}^L_{i_1},\ldots,\tilde{F}^L_{i_{N_u}}]^{\top}$ be the DNF values~\eqref{eq:rdnf} for all paths from $\mathbf{X}$. We are now ready to build a probabilistic model that includes a regularization prior,  an unsupervised likelihood, and a supervised likelihood.

The prior is an i.i.d.\ Gaussian $\mathcal{N}(0,1)$ that regularizes $\boldsymbol{w}$ to prevent overfitting. The unsupervised likelihood is an i.i.d.\ Gaussian $\mathcal{N}(0,\sigma_u)$ on the differences between each element of $\boldsymbol{\tilde{F}}_{\boldsymbol{w}}$ and $1$. It requires the predictions of $\tilde{f}$ to conform the merge consistency constraint for every path. Maximizing the unsupervised likelihood allows us to narrow down the potential solutions to a subset in the classifier hypothesis space without label information by exploring the sample feature representation commonality. The supervised likelihood is an i.i.d.\ Gaussian $\mathcal{N}(0,\sigma_s)$ on the prediction errors for supervised samples to enforce accurate predictions. It helps avoid consistent but trivial solutions of $\tilde{f}$, such as the ones that always predict $y=1$ or $y=0$, and guides the search towards the correct solution. The standard deviation parameters $\sigma_u$ and $\sigma_s$ control the contributions of the three terms. They can be preset to reflect our prior knowledge about the model distributions, tuned using a holdout set, or estimated from data.

By applying Bayes' rule, we have the posterior distribution of $\boldsymbol{w}$ as
\begin{equation}
  \begin{split}
    P(\boldsymbol{w}\,|\,\mathbf{X},\mathbf{X}_s,\mathbf{y}_s,\sigma_u,\sigma_s)\propto & \,P(\boldsymbol{w})\cdot P(\mathbf{1}\,|\,\mathbf{X},\boldsymbol{w},\sigma_u)\cdot P(\mathbf{y}_s\,|\,\mathbf{X}_s,\boldsymbol{w},\sigma_s)\\
    \propto & \,\exp\left(-\frac{\|\boldsymbol{w}\|_2^2}{2}\right)\\
    & \cdot\frac{1}{\left(\sqrt{2\pi}\sigma_u\right)^{N_u}}\exp\left(-\frac{\|\mathbf{1}-\boldsymbol{\tilde{F}}_{\boldsymbol{w}}\|_2^2}{2\sigma_u^2}\right)\\
    & \cdot\frac{1}{\left(\sqrt{2\pi}\sigma_s\right)^{N_s}}\exp\left(-\frac{\|\mathbf{y}_s-\boldsymbol{\tilde{f}}_{\boldsymbol{w}}\|_2^2}{2\sigma_s^2}\right),\label{eq:post}
  \end{split}
\end{equation}
where $N_u$ and $N_s$ are the number of elements in $\boldsymbol{\tilde{F}}_{\boldsymbol{w}}$ and $\boldsymbol{\tilde{f}}_{\boldsymbol{w}}$, respectively; $\mathbf{1}$ is a $N_u$-dimensional vector of ones.

\subsubsection{Inference}
We infer the model parameters $\boldsymbol{w}$, $\sigma_u$, and $\sigma_s$ using maximum a posteriori estimation. We effectively minimize the negative logarithm of the posterior
\begin{equation}
  \begin{split}
    J(\boldsymbol{w},\sigma_u,\sigma_s)= & \frac{1}{2}\|\boldsymbol{w}\|_2^2+\frac{1}{2\sigma_u^2}\|\mathbf{1}-\boldsymbol{\tilde{F}}_{\boldsymbol{w}}\|_2^2+N_u\log\sigma_u\\
    & +\frac{1}{2\sigma_s^2}\|\mathbf{y}_s-\boldsymbol{\tilde{f}}_{\boldsymbol{w}}\|_2^2+N_s\log\sigma_s.\label{eq:energy}
  \end{split}
\end{equation}

Observe that the DNF formula in~\eqref{eq:rdnf} is differentiable. With any (piecewise) differentiable choice of $\tilde{f}_{\boldsymbol{w}}$, we can minimize~\eqref{eq:energy} using (sub-) gradient descent. The gradient of~\eqref{eq:energy} with respect to the classifier parameter $\boldsymbol{w}$ is
\begin{equation}
  \nabla_{\boldsymbol{w}}J=\boldsymbol{w}^{\top}-\frac{1}{\sigma_u^2}\left(\mathbf{1}-\boldsymbol{\tilde{F}}_{\boldsymbol{w}}\right)^{\top}\nabla_{\boldsymbol{w}}\boldsymbol{\tilde{F}}_{\boldsymbol{w}}-\frac{1}{\sigma_s^2}\left(\mathbf{y}_s-\boldsymbol{\tilde{f}}_{\boldsymbol{w}}\right)^{\top}\nabla_{\boldsymbol{w}}\boldsymbol{\tilde{f}}_{\boldsymbol{w}},
\end{equation}

Since we choose $\tilde{f}$ to be a logistic sigmoid function with a linear discriminant~\eqref{eq:logsig}, the $j$-th ($j=1,\ldots,N_s$) row of $\nabla_{\boldsymbol{w}}\boldsymbol{\tilde{f}}_{\boldsymbol{w}}$ is
\begin{equation}
\nabla_{\boldsymbol{w}}\tilde{f}_j=\tilde{f}_j(1-\tilde{f}_j)\cdot\mathbf{x}_j^{\top}.\label{eq:dlogsig}
\end{equation}
where $\mathbf{x}_j$ is the $j$-th element in $\mathbf{X}_s$.

Define $g_j=\prod_{k=0}^{j-1}\tilde{f}_{\rho^k(i)}\cdot\prod_{k=j}^{L-1}(1-\tilde{f}_{\rho^k(i)})$, $j=0,\ldots,L$, we write~\eqref{eq:rdnf} as $\tilde{F}^L_i=1-\prod_{j=0}^L(1-g_j)$ as the $i$-th ($i=1,\ldots,N_u$) element of $\boldsymbol{\tilde{F}}_{\boldsymbol{w}}$. Then the $i$-th row of $\nabla_{\boldsymbol{w}}\boldsymbol{\tilde{F}}_{\boldsymbol{w}}$ is
\begin{equation}
\nabla_{\boldsymbol{w}}\tilde{F}^L_i=\sum_{j=0}^L\left(g_j\prod_{\substack{k=0\\k\neq j}}^L\left(1-g_k\right)\right)\left(\sum_{k=0}^{j-1}\frac{\nabla_{\boldsymbol{w}}\tilde{f}_{\rho^k(i)}}{\tilde{f}_{\rho^k(i)}}-\sum_{k=j}^{L-1}\frac{\nabla_{\boldsymbol{w}}\tilde{f}_{\rho^k(i)}}{1-\tilde{f}_{\rho^k(i)}}\right),\label{eq:ddnf}
\end{equation}
where $\nabla_{\boldsymbol{w}}\tilde{f}_{\rho^k(i)}$ can be computed using~\eqref{eq:dlogsig}.

We also alternately estimate $\sigma_u$ and $\sigma_s$ along with $\boldsymbol{w}$. Setting $\nabla_{\sigma_u}J=0$ and $\nabla_{\sigma_s}J=0$, we update $\sigma_u$ and $\sigma_s$ using the closed-form solutions
\begin{align}
\sigma_u= & \frac{\|\mathbf{1}-\boldsymbol{\tilde{F}}_{\boldsymbol{w}}\|_2}{\sqrt{N_u}}\\
\sigma_s= & \frac{\|\mathbf{y}_s-\boldsymbol{\tilde{f}}_{\boldsymbol{w}}\|_2}{\sqrt{N_s}}.
\end{align}

At testing time, we apply the learned $\tilde{f}$ to testing samples to predict their merging likelihood. Eventually, we compute the node potentials with~\eqref{eq:node_potential} and apply the greedy inference algorithm to acquire the final node label assignment (Section~\ref{sec:hmt}).

\section{Results}\label{sec:res}
We validate the proposed algorithm for 2D and 3D segmentation of neurons in three EM image data sets. For each data set, we apply SSHMT to the same segmentation tasks using different amounts of randomly selected subsets of ground truth data as the supervised sets.

\subsection{Data sets}

\subsubsection{Mouse neuropil data set}
\cite{deerinck2010enhancing} consists of $70$ 2D SBFSEM images of size $700\times700\times700$ at $10\times10\times50$ nm/pixel resolution. A random selection of $14$ images are considered as the whole supervised set, and the rest $56$ images are used for testing. We test our algorithm using $14$ ($100\%$), $7$ ($50\%$), $3$ ($21.42\%$), $2$ ($14.29\%$), $1$ ($7.143\%$), and half ($3.571\%$) ground truth image(s) as the supervised data. We use all the $70$ images as the unsupervised data for training. We target at 2D segmentation for this data set.

\subsubsection{Mouse cortex data set}
\cite{isbichallenge2013} is the original training set for the ISBI SNEMI3D Challenge~\cite{isbichallenge2013}. It is a $1024\times1024\times100$ SSSEM image stack at $6\times6\times30$ nm/pixel resolution. We use the first $1024\times1024\times50$ substack as the supervised set and the second $1024\times1024\times50$ substack for testing. There are $327$ ground truth neuron segments that are larger than $1000$ pixels in the supervised substack, which we consider as all the available supervised data. We test the performance of our algorithm by using $327$ ($100\%$), $163$ ($49.85\%$), $81$ ($24.77\%$), $40$ ($12.23\%$), $20$ ($6.116\%$), $10$ ($3.058\%$), and $5$ ($1.529\%$) true segments. Both the supervised and the testing substack are used for the unsupervised term. Due to the unavailability of the ground truth data, we did not experiment with the original testing image stack from the challenge. We target at 3D segmentation for this data set.

\subsubsection{\emph{Drosophila melanogaster} larval neuropil data set}
\cite{knott2008serial} is a $500\times500\times500$ FIBSEM image volume at $10\times10\times10$ nm/pixel resolution. We divide the whole volume evenly into eight $250\times250\times250$ subvolumes and do eight-fold cross validation using one subvolume each time as the supervised set and the whole volume as the testing data. Each subvolume has from $204$ to $260$ ground truth neuron segments that are larger than $100$ pixels. Following the setting in the mouse cortex data set experiment, we use subsets of $100\%$, $50\%$, $25\%$, $12.5\%$, $6.25\%$, and $3.125\%$ of all true neuron segments from the respective supervised subvolume in each fold of the cross validation as the supervised data to generate boundary classification labels. We use the entire volume to generate unsupervised samples. We target at 3D segmentation for this data set.

\subsection{Experiments}
We use fully trained Cascaded Hierarchical Models~\cite{seyedhosseini2013image} to generate membrane detection confidence maps and keep them fixed for the HMT and SSHMT experiments on each data set, respectively. To generate initial superpixels, we use the watershed algorithm~\cite{beucher1992morphological} over the membrane confidence maps. For the boundary classification, we use features including shape information (region size, perimeter, bounding box, boundary length, etc.) and image intensity statistics (mean, standard deviation, minimum, maximum, etc.) of region interior and boundary pixels from both the original EM images and membrane detection confidence maps.

We use the adapted Rand error metric~\cite{arganda2015crowdsourcing} to generate boundary classification labels using whole ground truth images (Section~\ref{sec:hmt}) for the 2D mouse neuropil data set. For the 3D mouse cortex and \emph{Drosophila melanogaster} larval neuropil data sets, we determine the labels using individual ground truth segments instead. We use this setting in order to match the actual process of analyzing EM images by neuroscientists. Details about label generation using individual ground truth segments are provided in Appendix~\ref{app}.

We can see in~\eqref{eq:rdnf} and~\eqref{eq:ddnf} that computing $\tilde{F}^L_i$ and its gradient involves multiplications of $L$ floating point numbers, which can cause underflow problems for leaf-to-root clique paths in a merge tree of even moderate height. To avoid this problem, we exploit the local property of the merge consistency constraint and compute $\tilde{F}^L_i$ for every path subpart of small length $L$. In this paper, we use $L=3$ for all experiments. For inference, we initialize $\boldsymbol{w}$ by running gradient descent on~\eqref{eq:energy} with only the supervised term and the regularizer before adding the unsupervised term for the whole optimization. We update $\sigma_u$ and $\sigma_s$ in between every $100$ gradient descent steps on $\boldsymbol{w}$.

We compare SSHMT with the fully supervised HMT~\cite{liu2014modular} as the baseline method. To make the comparison fair, we use the same logistic sigmoid function as the boundary classifier for both HMT and SSHMT. The fully supervised training uses the same Bayesian framework only without the unsupervised term in~\eqref{eq:energy} and alternately estimates $\sigma_s$ to balance the regularization term and the supervised term. All the hyperparameters are kept identical for HMT and SSHMT and fixed for all experiments. We use the adapted Rand error~\cite{arganda2015crowdsourcing} following the public EM image segmentation challenges~\cite{arganda2015crowdsourcing,isbichallenge2013}. Due to the randomness in the selection of supervised data, we repeat each experiment $50$ times, except in the cases that there are fewer possible combinations. We report the mean and standard deviation of errors for each set of repeats on the three data sets in Table~\ref{tab:res}. For the 2D mouse neuropil data set, we also threshold the membrane detection confidence maps at the optimal level, and the adapted Rand error is $0.2023$. Since the membrane detection confidence maps are generated in 2D, we do not measure the thresholding errors of the other 3D data sets. In addition, we report the results from using the globally optimal tree inference~\cite{liu2016image} in the supplementary materials for comparison.

\begin{table}
  \centering
  \caption{Means and standard deviations of the adapted Rand errors of HMT and SSHMT segmentations for the three EM data sets. The left table columns show the amount of used ground truth data, in terms of (a) the number of images, (b) the number of segments, and (c) the percentage of all segments. Bold numbers in the tables show the results of the higher accuracy under comparison. The figures on the right visualize the means (dashed lines) and the standard deviations (solid bars) of the errors of HMT (red) and SSHMT (blue) results for each data set.\\}\label{tab:res}
  \subfloat[Mouse neuropil\label{tab:sub:ncmir}]{
    \begin{tabular}[t]{cc}
      \begin{tabular}{|c|c|c|c|c|}
        \cline{2-5}
        \multicolumn{1}{c|}{} & \multicolumn{2}{c|}{HMT} & \multicolumn{2}{c|}{SSHMT}\\
        \hline
        $\#$GT & Mean & Std. & Mean & Std.\\
        \hline
        $14$ & $\mathbf{0.1135}$ & - & $0.1196$ & -\\
        $7$ & $0.1382$ & $0.03238$ & $\mathbf{0.1208}$ & $0.004033$\\
        $3$ & $0.1492$ & $0.04851$ & $\mathbf{0.1205}$ & $0.001383$\\
        $2$ & $0.1811$ & $0.07346$ & $\mathbf{0.1217}$ & $0.004116$\\
        $1$ & $0.2035$ & $0.1029$ & $\mathbf{0.1210}$ & $0.002206$\\
        $0.5$ & $0.2505$ & $0.1062$ & $\mathbf{0.1365}$ & $0.1079$\\
        \hline
        \multicolumn{5}{|l|}{Optimal thresholding: $0.2023$}\\
        \hline
      \end{tabular} &
      \begin{tabular}{c}
        \includegraphics[width=0.43\textwidth]{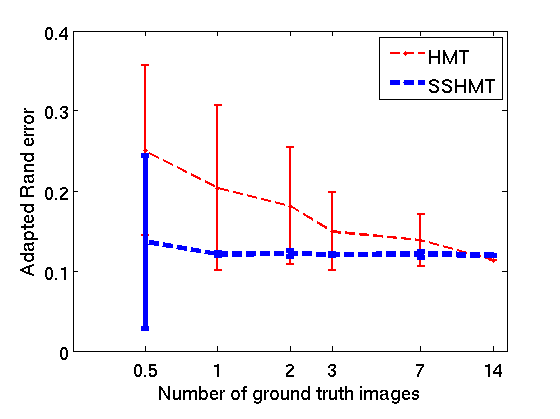}
      \end{tabular}
    \end{tabular}
  }\\
  \subfloat[Mouse cortex\label{tab:sub:ac4}]{
    \begin{tabular}[t]{cc}
      \begin{tabular}{|c|c|c|c|c|}
        \cline{2-5}
        \multicolumn{1}{c|}{} & \multicolumn{2}{c|}{HMT} & \multicolumn{2}{c|}{SSHMT}\\
        \hline
        $\#$GT & Mean & Std. & Mean & Std.\\
        \hline
        $327$ & $\mathbf{0.1101}$ & - & $0.1104$ & -\\
        $163$ & $0.1344$ & $0.03660$ & $\mathbf{0.1189}$ & $0.01506$\\
        $81$ & $0.1583$ & $0.06909$ & $\mathbf{0.1215}$ & $0.01661$\\
        $40$ & $0.1844$ & $0.1019$ & $\mathbf{0.1198}$ & $0.01690$\\
        $20$ & $0.2205$ & $0.1226$ & $\mathbf{0.1238}$ & $0.01466$\\
        $10$ & $0.2503$ & $0.1561$ & $\mathbf{0.1219}$ & $0.01273$\\
        $5$ & $0.4389$ & $0.2769$ & $\mathbf{0.2008}$ & $0.2285$\\
        \hline
      \end{tabular} &
      \begin{tabular}{c}
        \includegraphics[width=0.43\textwidth]{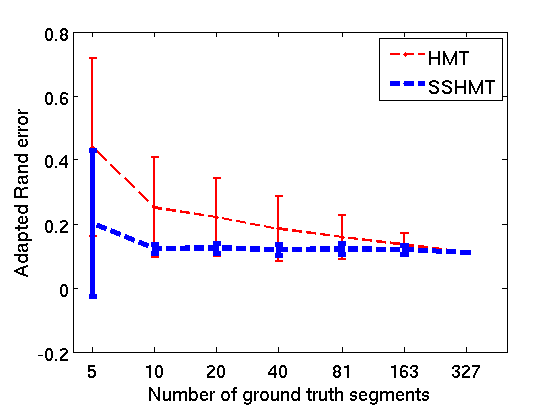}
      \end{tabular}
    \end{tabular}
  }\\
  \subfloat[\emph{Drosophila melanogaster} larval neuropil\label{tab:sub:fibsem}]{
    \begin{tabular}[t]{cc}
      \begin{tabular}{|c|c|c|c|c|}
        \cline{2-5}
        \multicolumn{1}{c|}{} & \multicolumn{2}{c|}{HMT} & \multicolumn{2}{c|}{SSHMT}\\
        \hline
        $\%$GT & Mean & Std. & Mean & Std.\\
        \hline
        $100\%$ & $0.06044$ & - & $\mathbf{0.05504}$ & -\\
        $50\%$ & $0.09004$ & $0.04476$ & $\mathbf{0.05602}$ & $0.005550$\\
        $25\%$ & $0.1240$ & $0.07491$ & $\mathbf{0.05803}$ & $0.007703$\\
        $12.5\%$ & $0.1418$ & $0.1055$ & $\mathbf{0.05835}$ & $0.007797$\\
        $6.25\%$ & $0.1748$ & $0.1389$ & $\mathbf{0.05756}$ & $0.008933$\\
        $3.125\%$ & $0.2017$ & $0.1871$ & $\mathbf{0.06213}$ & $0.03660$\\
        \hline
      \end{tabular} &
      \begin{tabular}{c}
        \includegraphics[width=0.43\textwidth]{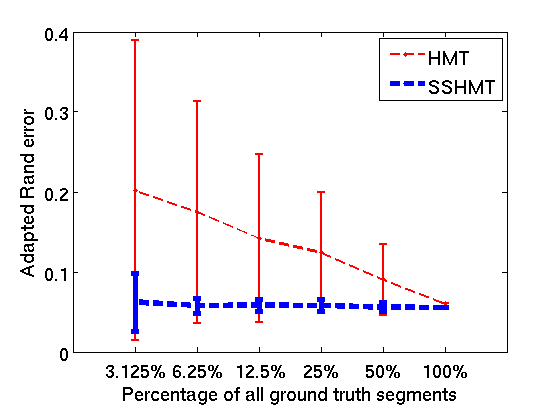}
      \end{tabular}
    \end{tabular}
  }
\end{table}

Examples of 2D segmentation testing results from the mouse neuropil data set using fully supervised HMT and SSHMT with $1$ ($7.143\%$) ground truth image as supervised data are shown in Fig.~\ref{fig:viz:ncmir}. Examples of 3D individual neuron segmentation testing results from the \emph{Drosophila melanogaster} larval neuropil data set using fully supervised HMT and SSHMT with $12$ ($6.25\%$) true neuron segments as supervised data are shown in Fig.~\ref{fig:viz:fibsem}.

\begin{figure}
\centering
\begin{tabular}{cccc}
\includegraphics[width=0.23\textwidth]{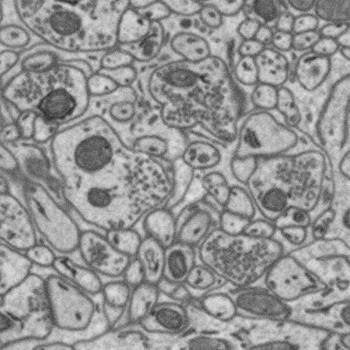} &
\includegraphics[width=0.23\textwidth]{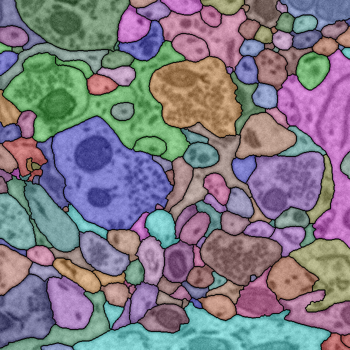} &
\includegraphics[width=0.23\textwidth]{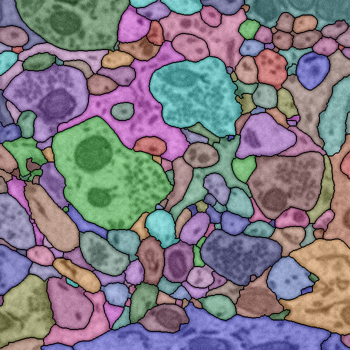} &
\includegraphics[width=0.23\textwidth]{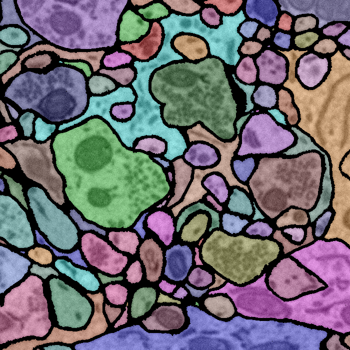} \\
\includegraphics[width=0.23\textwidth]{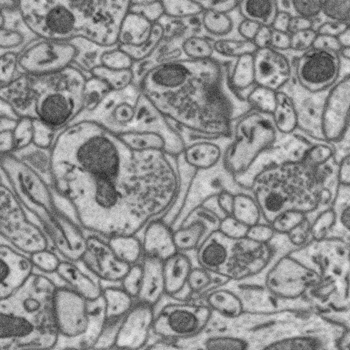} &
\includegraphics[width=0.23\textwidth]{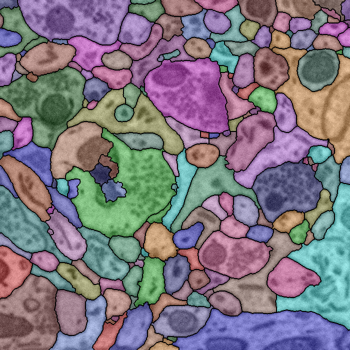} &
\includegraphics[width=0.23\textwidth]{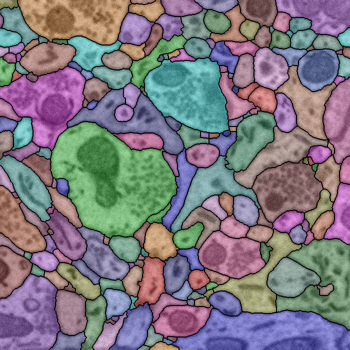} &
\includegraphics[width=0.23\textwidth]{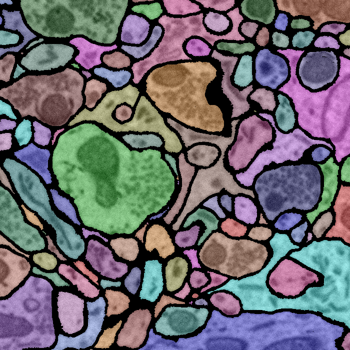} \\
\includegraphics[width=0.23\textwidth]{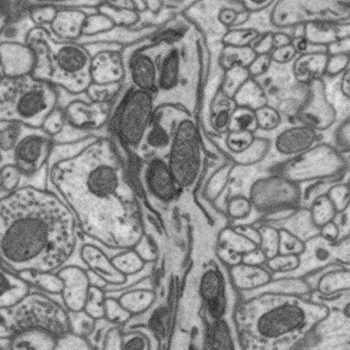} &
\includegraphics[width=0.23\textwidth]{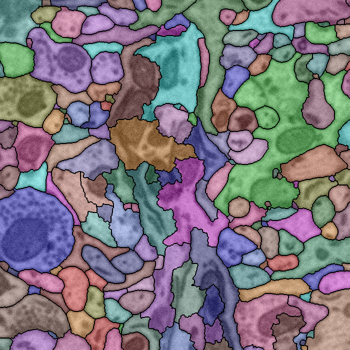} &
\includegraphics[width=0.23\textwidth]{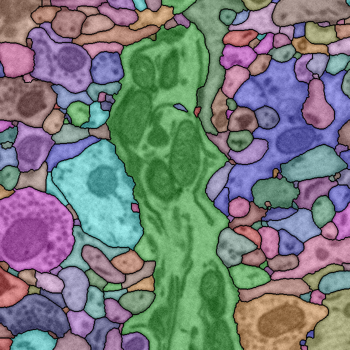} &
\includegraphics[width=0.23\textwidth]{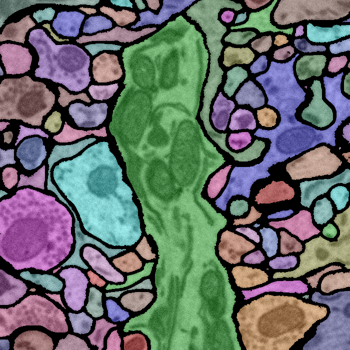} \\
\includegraphics[width=0.23\textwidth]{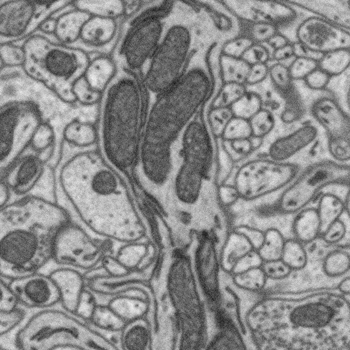} &
\includegraphics[width=0.23\textwidth]{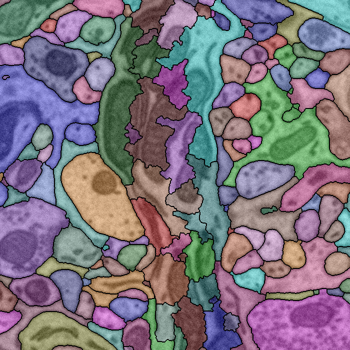} &
\includegraphics[width=0.23\textwidth]{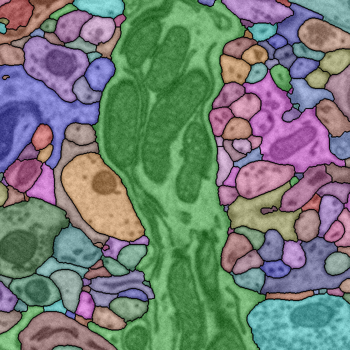} &
\includegraphics[width=0.23\textwidth]{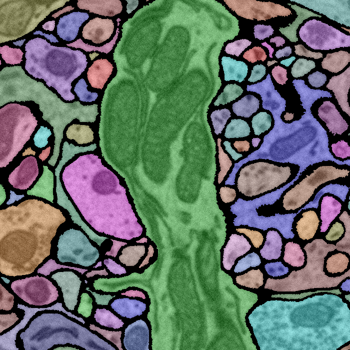} \\
\includegraphics[width=0.23\textwidth]{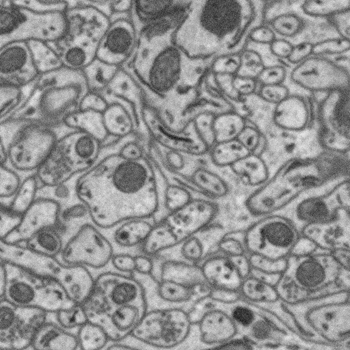} &
\includegraphics[width=0.23\textwidth]{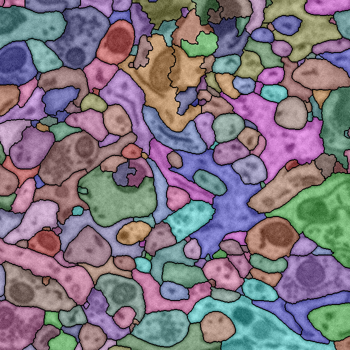} &
\includegraphics[width=0.23\textwidth]{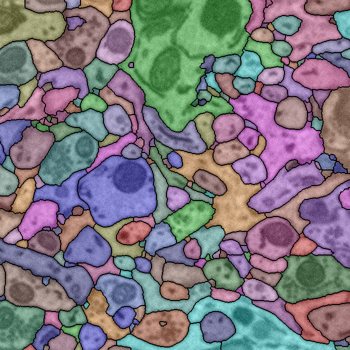} &
\includegraphics[width=0.23\textwidth]{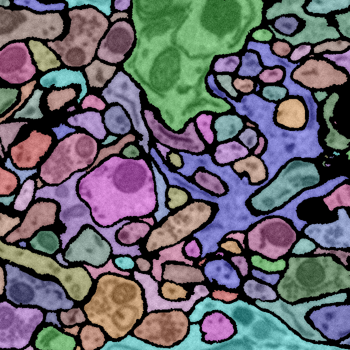} \\
(a) Original & (b) HMT & (c) SSHMT & (d) Ground truth\\
\end{tabular}
\caption{Examples of the 2D segmentation testing results for the mouse neuropil data set, including (a) original EM images, (b) HMT and (c) SSHMT results using $1$ ground truth image as supervised data, and (d) the corresponding ground truth images. Different colors indicate different individual segments.}\label{fig:viz:ncmir}
\end{figure}

\begin{figure}
\centering
\begin{tabular}{ccc}
\includegraphics[width=0.27\textwidth]{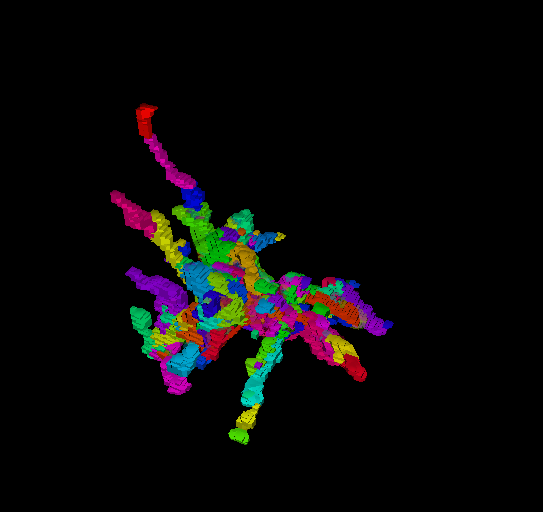} &
\includegraphics[width=0.27\textwidth]{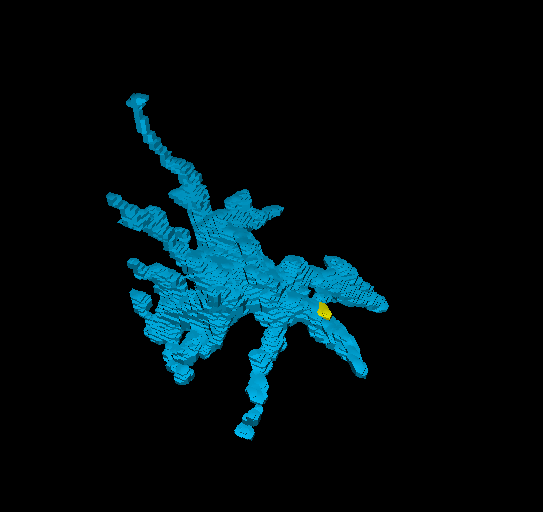} &
\includegraphics[width=0.27\textwidth]{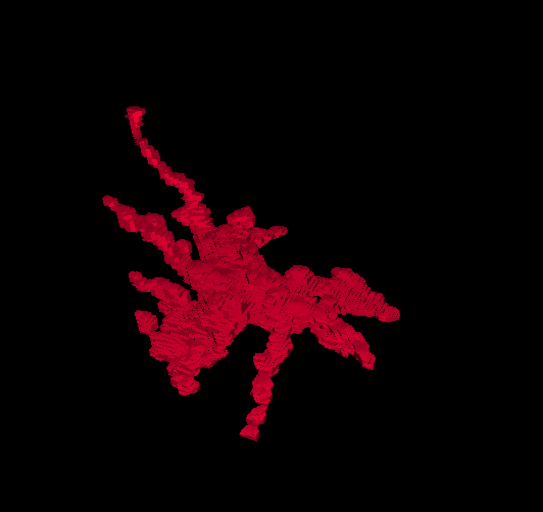}\\
\includegraphics[width=0.27\textwidth]{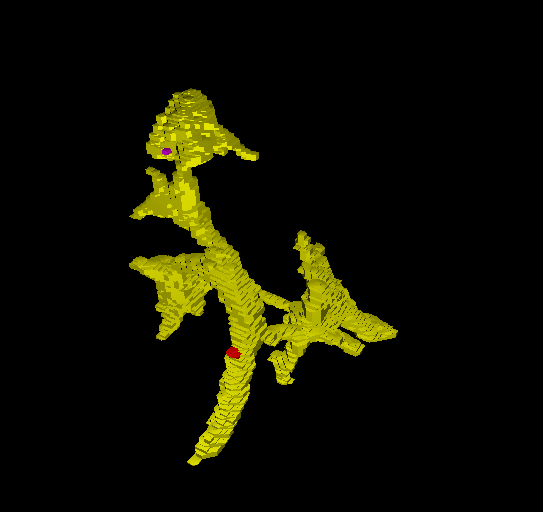} &
\includegraphics[width=0.27\textwidth]{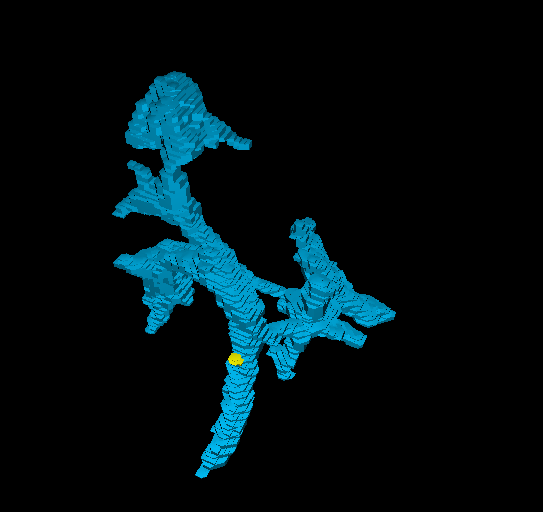} &
\includegraphics[width=0.27\textwidth]{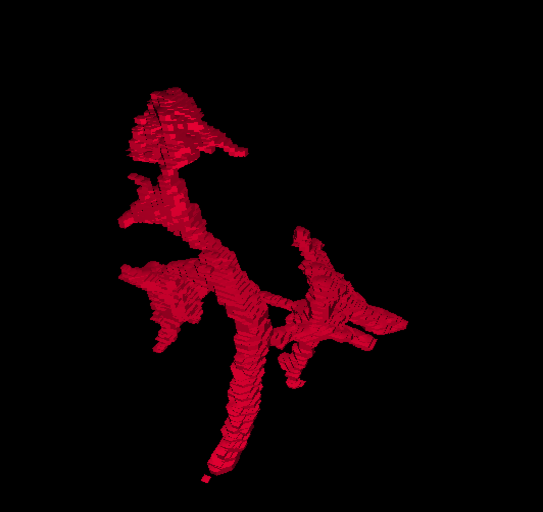}\\
\includegraphics[width=0.27\textwidth]{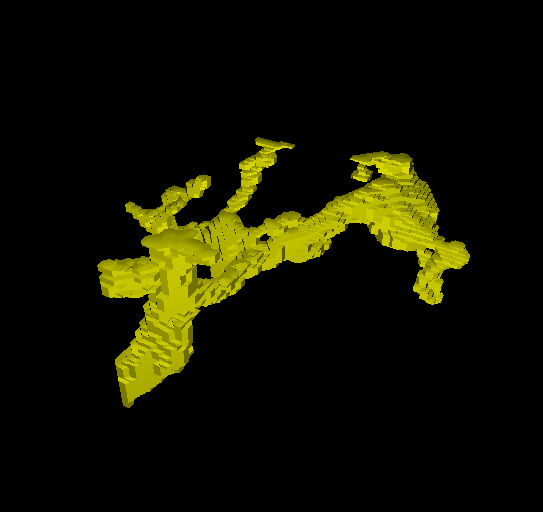} &
\includegraphics[width=0.27\textwidth]{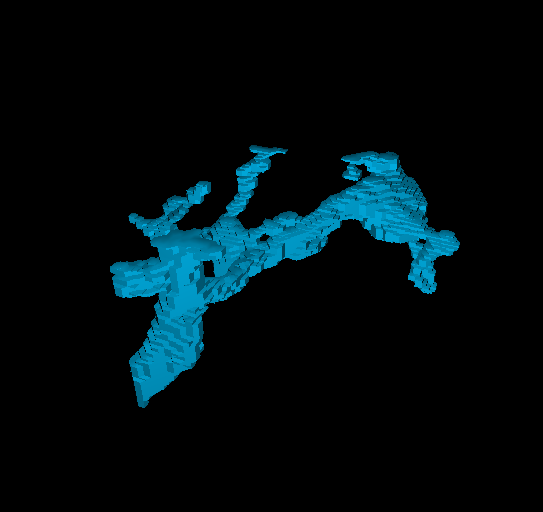} &
\includegraphics[width=0.27\textwidth]{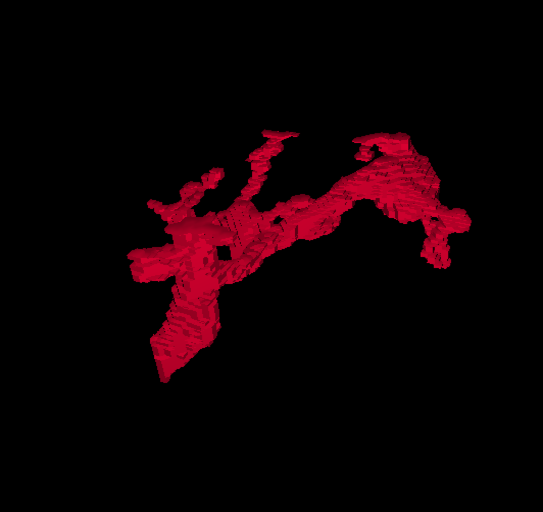}\\
\includegraphics[width=0.27\textwidth]{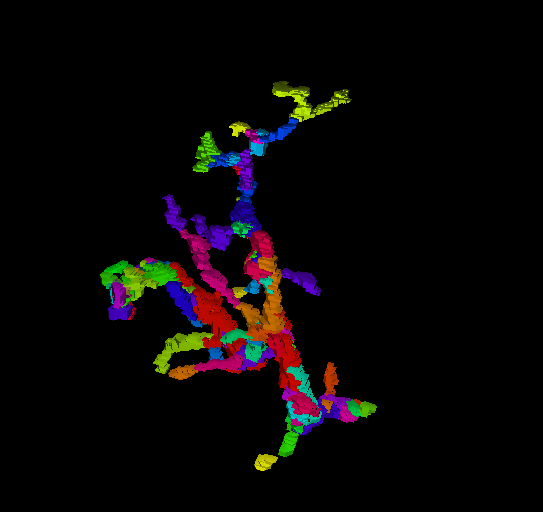} &
\includegraphics[width=0.27\textwidth]{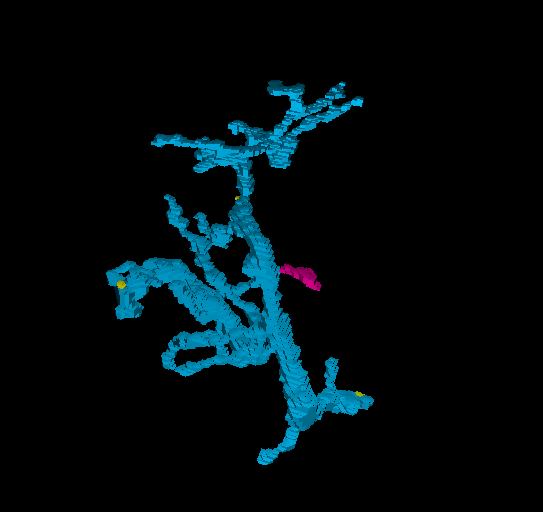} &
\includegraphics[width=0.27\textwidth]{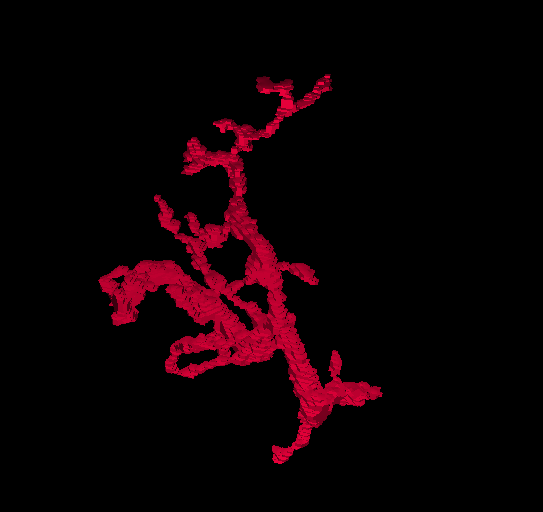}\\
\includegraphics[width=0.27\textwidth]{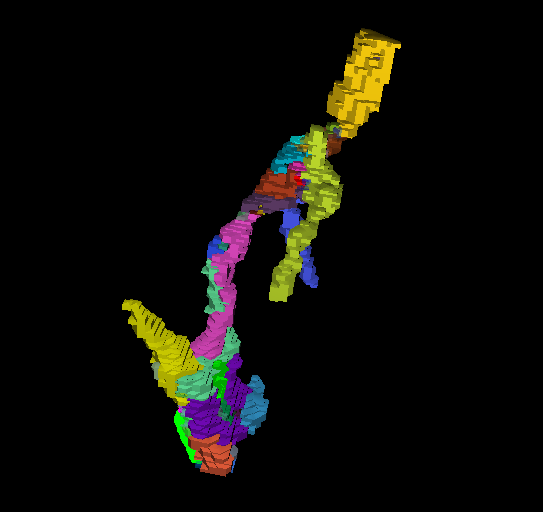} &
\includegraphics[width=0.27\textwidth]{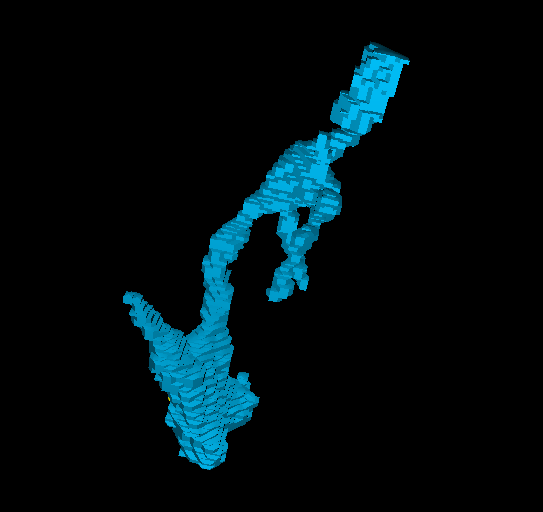} &
\includegraphics[width=0.27\textwidth]{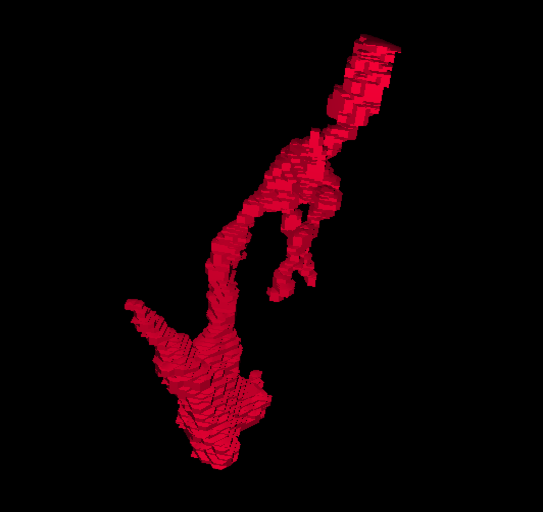}\\
(a) HMT & (b) SSHMT & (c) Ground truth\\
\end{tabular}
\caption{Examples of individual neurons from the 3D segmentation testing results for the \emph{Drosophila melanogaster} larval neuropil data set, including (a) HMT and (b) SSHMT results using $12$ ($6.25\%$) 3D ground truth segments as supervised data, and (c) the corresponding ground truth segments. Different colors indicate different individual segments. The 3D visualizations are generated using Fiji~\cite{schindelin2012fiji}.}\label{fig:viz:fibsem}
\end{figure}

From Table~\ref{tab:res}, we can see that with abundant supervised data, the performance of SSHMT is similar to HMT in terms of segmentation accuracy, and both of them significantly improve from optimally thresholding (Table~\ref{tab:sub:ncmir}). When the amount of supervised data becomes smaller, SSHMT significantly outperforms the fully supervised method with the accuracy close to the HMT results using the full supervised sets. Moreover, the introduction of the unsupervised term stabilizes the learning of the classification function and results in much more consistent segmentation performance, even when only very limited ($3\%$ to $7\%$) label data are available. Increases in errors and large variations are observed in the SSHMT results when the supervised data become too scarce. This is because the few supervised samples are incapable of providing sufficient guidance to balance the unsupervised term, and the boundary classifiers are biased to give trivial predictions.

Fig.~\ref{fig:viz:ncmir} shows that SSHMT is capable of fixing both over- and under-segmentation errors that occur in the HMT results. Fig.~\ref{fig:viz:fibsem} also shows that SSHMT can fix over-segmentation errors and generate highly accurate neuron segmentations. Note that in our experiments, we always randomly select the supervised data subsets. For realistic uses, we expect supervised samples of better representativeness to be provided with expertise and the performance of SSHMT to be further improved.

We also conducted an experiment with the mouse neuropil data set in which we use only $1$ ground truth image to train the membrane detector, HMT, and SSHMT to test a fully semi-supervised EM segmentation pipeline. We repeat $14$ times for every ground truth image in the supervised set. The optimal thresholding gives adapted Rand error $0.3603\pm 0.06827$. The error of the HMT results is $0.2904\pm 0.09303$, and the error of the SSHMT results is $0.2373\pm 0.06827$. Despite the increase of error, which is mainly due to the fully supervised nature of the membrane detection algorithm, SSHMT again improves the region accuracy from optimal thresholding and has a clear advantage over HMT.

We have open-sourced our code at \url{https://github.com/tingliu/glia}. It takes approximately 80 seconds for our SSHMT implementation to train and test on the whole mouse neuropil data set using $50$ $2.5$ GHz Intel Xeon CPUs and about $150$ MB memory.

\section{Conclusion}
In this paper, we proposed a semi-supervised method that can consistently learn boundary classifiers with very limited amount of supervised data for region-based image segmentation. This dramatically reduces the high demands for ground truth data by fully supervised algorithms. We applied our method to neuron segmentation in EM images from three data sets and demonstrated that by using only a small amount of ground truth data, our method performed close to the state-of-the-art fully supervised method with full labeled data sets. In our future work, we will explore the integration of the proposed constraint based unsupervised loss in structural learning settings to further exploit the structured information for learning the boundary classification function. Also, we may replace the current logistic sigmoid function with more complex classifiers and combine our method with active learning frameworks to improve segmentation accuracy.

\subsubsection{Acknowledgment} This work was supported by NSF IIS-1149299 and NIH 1R01NS075314-01. We thank the National Center for Microscopy and Imaging Research at the University of California, San Diego, for providing the mouse neuropil data set. We also thank Mehdi Sajjadi at the University of Utah for the constructive discussions.

\appendix
\section{Appendix: Generating Boundary Classification Labels Using Individual Ground Truth Segments}\label{app}
Assume we only have individual annotated image segments instead of entire image volumes as ground truth. Given a merge tree, we generate the best-effort ground truth classification labels for a subset of cliques as follows:
\begin{enumerate}
\item For every region represented by a tree node, compute the Jaccard indices of this region against all the annotated ground truth segments. Use the highest Jaccard index of each node as its eligible score.
\item Mark every node in the tree as ``eligible'' if its eligible score is above certain threshold ($0.75$ in practice) or ``ineligible'' otherwise.
\item Iteratively select a currently ``eligible'' node with the highest eligible score; mark it and its ancestors and descendants as ``ineligible'', until every node is ``ineligible''. This procedure generates a set of selected nodes.
\item For every selected node, label the cliques at itself and its descendants as $y=1$ (``merge'') and the cliques at its ancestors as $y=0$ (``split'').
\end{enumerate}

Eventually, the clique samples that receive merge/split labels are considered as the supervised data.

\bibliographystyle{splncs}
\bibliography{refs-arxiv}

\newpage
\title{SSHMT: Semi-supervised Hierarchical Merge Tree for Electron Microscopy Image Segmentation}
\subtitle{Supplementary Materials}
\author{Ting~Liu\inst{1}\and Miaomiao~Zhang\inst{2}\and Mehran~Javanmardi\inst{1}\and Nisha~Ramesh\inst{1}\and Tolga~Tasdizen\inst{1}}
\institute{Scientific Computing and Imaging Institute, University of Utah, USA\\\email{\{ting,mehran,nshramesh,tolga\}@sci.utah.edu}\\
  \and CSAIL, Massachusetts Institute of Technology, USA\\\email{miao86@mit.edu}}

\maketitle

Under the same experiment setting as in Section 4.2, we report the results from using~[9] in Table~S1, which considers the merge tree structure as a constrained conditional model (CCM) and computes globally optimal solutions based on supervised learning for inference.

\begin{table}
  \centering
  \caption*{Table~S1: Means and standard deviations of the adapted Rand errors of CCM segmentations for (a) the mouse neuropil and (b) the \emph{Drosophila melanogaster} larval neuropil data sets. The left table columns show the amount of used ground truth data, in terms of (a) the number of images and (b) the percentage of all segments.}\label{tab:res}
  \subfloat[Mouse neuropil\label{tab:sub:ncmir}]{
    \begin{tabular}{|c|c|c|}
      \cline{2-3}
      \multicolumn{1}{c|}{} & \multicolumn{2}{c|}{CCM}\\
      \hline
      $\#$GT & Mean & Std.\\
      \hline
      $14$ & $0.1166$ & - \\
      $7$ & $0.1238$ & $0.01548$\\
      $3$ & $0.1278$ & $0.02957$\\
      $2$ & $0.1412$ & $0.03900$\\
      $1$ & $0.1465$ & $0.03738$\\
      $0.5$ & $0.1900$ & $0.08586$\\
      \hline
    \end{tabular}
  }\qquad
  \subfloat[\emph{Drosophila}\label{tab:sub:fibsem}]{
    \begin{tabular}{|c|c|c|}
      \cline{2-3}
      \multicolumn{1}{c|}{} & \multicolumn{2}{c|}{CCM}\\
      \hline
      $\#$GT & Mean & Std.\\
      \hline
      $100\%$ & $0.05812$ & - \\
      $50\%$ & $0.06612$ & $0.01868$\\
      $25\%$ & $0.08067$ & $0.05281$\\
      $12.5\%$ & $0.08262$ & $0.04714$\\
      $6.25\%$ & $0.08186$ & $0.04463$\\
      $3.125\%$ & $0.09784$ & $0.09035$\\
      \hline
    \end{tabular}
  }
\end{table}

Comparing Table~S1 with Table~1 in Section 4.2, we can see that even though the supervised CCM improves from HMT, our SSHMT still consistently outperforms it with a clear margin. Also, the globally optimal inference algorithm in CCM can be used in combination with the proposed semi-supervised learning framework conveniently. We experienced a data loss of the mouse cortex dataset due to power outage, so we did not experiment on this dataset, but we expect similar results.
\end{document}